\def\tmid{\!\mid\!}
\newcommand{\tw}{\operatorname{tw}}
\newcommand{\pw}{\operatorname{pw}}
\newcommand{\mvf}{\kappa}
\newcommand{\Oh}{\mathcal{O}}
\newcommand{\parvf}{maximum pairwise connectivity\xspace}
\newcommand{\Parvf}{MAXIMUM PAIRWISE CONNECTIVITY\xspace}
\newcommand{\bag}{\texttt{bag}}
\newcommand{\cmp}{\texttt{cmp}}
\newcommand{\connreply}{\textsf{Connected}\xspace}
\newcommand{\disconnreply}{\textsf{Disconnected}\xspace}
\newtheorem{theorem}{Theorem}[section]
\newtheorem{claim}[theorem]{Claim}
\newtheorem{lemma}[theorem]{Lemma}
\newcommand{\cqed}{\ensuremath{\lhd}}
\newenvironment{claimproof}{\par
	\pushQED{\cqed}%
	\normalfont \topsep6\p@\@plus6\p@\relax
	\trivlist
	\item\relax
	{\itshape
		Proof of the claim\@addpunct{.}}\hspace\labelsep\ignorespaces
}{%
	\hfill\popQED\endtrivlist\@endpefalse
}
\begin{document}

\twocolumn[

\aistatstitle{Structural perspective on constraint-based learning of Markov networks}

\aistatsauthor{Tuukka Korhonen \And Fedor V. Fomin \And Pekka Parviainen}

\aistatsaddress{University of Bergen \And University of Bergen \And University of Bergen} ]

\begin{abstract}

Markov networks are  probabilistic graphical models that employ undirected graphs to depict conditional independence relationships among variables. Our focus lies in constraint-based structure learning, which entails learning the undirected graph from  data through the execution of conditional independence tests.
We establish theoretical limits concerning two critical aspects of constraint-based learning of Markov networks: the number of tests and the sizes of the conditioning sets. These bounds uncover an exciting interplay between the structural properties of the graph and the amount of tests required to learn a Markov network. 
The starting point of our work is that the graph parameter    \emph{maximum pairwise connectivity}, $\mvf$, that is, the maximum number of vertex-disjoint paths connecting a pair of vertices in the graph, 
is responsible for the sizes of independence tests required to learn the graph. On one hand, we show that at least one test with the 
size of the conditioning set at least $\mvf$ is always necessary. 
On the other hand, we prove that any graph can be learned by performing tests of size at most $\mvf$. 
This completely resolves the question of the minimum size of conditioning sets required to learn the graph. 
When it comes to the number of tests, our upper bound on the sizes of conditioning sets implies that every $n$-vertex graph can be learned by at most  $n^{\mvf}$ tests with conditioning sets of sizes at most $\mvf$.
We show that for any upper bound $q$ on the sizes of the conditioning sets, there exist graphs with $\Oh(n q)$ vertices that require at least $n^{\Omega(\mvf)}$ tests to learn.
This lower bound holds even when the treewidth and the maximum degree of the graph are at most $\mvf+2$.
On the positive side, we prove that every graph of bounded treewidth can be learned by a polynomial number of tests with conditioning sets of sizes at most~$2\mvf$. 
\end{abstract}

\section{INTRODUCTION}
%Introduction

%Outline:

%- Background: Something about PGMs and Markov networks
%\todo[inline]{I now always use ``independence test'' instead of ``query'' in the technical sections. Hope that this is good notation and we could also use it in the intro? I also always say ``underlying graph'' when talking about the graph that we wish to learn--Tuukka}

Probabilistic graphical models (PGM) represent multivariate probability distributions using a graph structure to encode conditional independencies in the distribution. In addition to the graph structure, PGMs have parameters that specify the distribution. In this work, we study Markov networks whose structure is an undirected graph. 

%- Learning Markov networks: Constraint-based vs score-based

Markov networks are usually learned using the so-called score-based approach, where one aims to find the structure and parameters that maximize a score (e.g., likelihood). Alternatively, one can use the constraint-based approach to learn the structure. In the constraint-based approach, one conducts conditional independence tests and constructs a graph expressing the same conditional independencies and dependencies implied by the test results. Note that if one uses a constraint-based approach to learn the structure, one has to learn parameters separately afterward.

The relation between the graph and conditional independencies is straightforward in a Markov network. If a distribution $P$ factorizes according to an undirected graph $G$ and vertices $u$ and $v$ are separated by a set $S$ in $G$, then $u$ and $v$ are conditionally independent given $S$ in $P$. Constraint-based learning aims in the opposite direction: One observes conditional independencies (or dependencies) in the distribution $P$, and the goal is to construct the graph $G$. 

%\todo[inline]{TK: Should we use $G$ in the intro instead of $G^*$? I don't see any advantage of using $G^*$ here. Also, in the technical sections I use $V(G)$ instead of $V$ for the vertex set.}

This work aims to establish fundamental complexity results for constraint-based structure learning in Markov networks. Our results are based on two assumptions: (i) The distribution $P$ is faithful to an undirected graph $G$, that is, $u$ and $v$ are conditionally independent of $S$ in $P$ if and only if $u$ and $v$ are separated by $S$ in $G$ and (ii) we have access to a {\em conditional independence oracle} which always answers correctly to any pairwise conditional independence query in $P$. The first condition guarantees that there exists a unique graph $G$, and the second condition makes it possible to identify it.

While these assumptions are strong and the latter is never true in practice because statistical tests sometimes give erroneous results, they help us establish learning limits. In other words, if something cannot be learned under these idealized conditions, it cannot be learned in realistic settings. Under these assumptions, the constraint-based structure learning in Markov network reduces to the following elegant combinatorial model. In this model, for a vertex set  $V(G)$ of an unknown graph $G$, we want to learn all adjacencies between vertices of  $G$. 
 For that purpose, we use the independence oracle, which for any pair of vertices  $u, v\in V(G) $ and vertex set $S\subseteq V(G)$ correctly answers whether $S$ separates $u$ from $v$.

Of course, any graph on $n$ vertices could be learned by using $\Oh (n^2)$ queries of size  $n - 2$ just by asking for every pair of vertices if the remaining vertices of the graph separate them. However, assuming that the cost of asking the oracle grows very fast (like exponentially in the size of $S$),  we are interested in learning all adjacencies of $G$  by asking queries of the smallest possible size. This is motivated by statistical tests being less reliable and more computationally expensive when the conditioning set $S$ is large.

A well-known observation is that the structure of a Markov network could be learned from conditioning sets. In a Markov network, a variable is conditionally independent of its non-neighbors given its neighbors (Markov blanket). This brings to the observation that a graph with the maximum vertex degree $\Delta$ could be learned with   $\Oh(n^\Delta)$ independence tests with conditioning set $S$ of size at most $\Delta$; see, for example, \citet{koller}. 

%Learning  This observation gives a straightforward algorithm such that a graph $G$ with maximum degree $\Delta$ can be learned with $\Oh(n^\Delta)$ independence tests with conditioning set $S$ of size at most $\Delta$; see, for example, \citet{koller}. 

The maximum degree $\Delta$ serves as an example of a structural property within the structure $G$ of the data-generating distribution $P$. This naturally leads us to the following question: \emph{How do the structural properties of the data-generating distribution $P$ impact the number of conditional independence tests and the sizes of conditioning sets required to reconstruct the graph structure $G$?} Furthermore, \emph{what other properties, apart from the maximum degree $\Delta$, are crucial for the learning process?}

%The maximum degree $\Delta$ is an example of a structural property of the structure $G$ of the data-generating distribution $P$.  This naturally brings us to the following questions. \emph{How do the structural properties of the data-generating distribution $P$ affect the number of conditional independence tests and the conditioning set sizes needed to recover the graph structure $G$? What other properties besides the maximum degree $\Delta$ are essential for learning?}

Our contributions are as follows. First, we address the question about the size of the conditioning sets. We identify the parameter \emph{maximum pairwise connectivity $\mvf$}, the maximum number of vertex disjoint paths connecting a pair of vertices of the graph, as the key factor. To define it more precisely,  
for two vertices $u,v \in V(G)$, we denote by $\mvf(G, u, v)$ the maximum number of vertex-disjoint paths, each having at least one internal vertex, between $u$ and $v$.
Then $\mvf(G) = \max_{u,v \in V(G)} \mvf(G, u, v)$  is the maximum value of $\mvf(G, u, v)$ over all pairs $u,v$, see \Cref{fig:kappa}. It's important to note that the parameter $\mvf$ never exceeds the maximum vertex degree $\Delta$. Additionally, there are graphs, like trees, where 
$\mvf$ is equal to one and $\Delta$ can be as large as the total number of vertices minus one. 

%We show that to identify graph $G$, at least one conditional independence test with a conditioning set of size $\mvf$ (\Cref{the:mainlb}) is necessary. Furthermore, the conditioning set size $\mvf$ is sufficient. That is, any graph can be learned with tests involving conditioning sets of size at most $\mvf$ (\Cref{the:basicalgo}). 
We demonstrate that in order to identify the graph $G$, it is necessary to perform at least one conditional independence test with a conditioning set of size $\mvf$, as stated in \Cref{the:mainlb}. Furthermore, we establish that a conditioning set size of $\mvf$ is sufficient. In other words, any graph can be learned through tests involving conditioning sets of size at most $\mvf$, as proved in \Cref{the:basicalgo}.

The proof of \Cref{the:basicalgo} is constructive, and the upper bound is achieved using a straightforward algorithm. Essentially,  this algorithm  conducts all possible independence tests up to size $ \mvf$. Together, these bounds demonstrate that the size of the largest independence test needed is determined  by the structure of the graph $G$. Moreover, no algorithm can outperform the straightforward algorithm in this regard.
%and no algorithm can do better than the straightforward algorithm in this respect.

%\begin{figure*}[htb!]
%    \centering
%    \begin{tabular}{ccc}
%\includegraphics[width=0.2\textwidth]{grid.pdf}  
% 
% 
%    \end{tabular}
%    \caption{Vertices $u,v$ connected by three vertex disjoint paths, each of these paths contains at least one internal vertex. Four vertex disjoint paths could connect no pair of vertices of this graph. Thus the maximum pairwise connectivity $\mvf$ of this grid is $3$. }
%    \label{fig:kappa}
%\end{figure*}

\begin{figure}%[htb!]
   \centering

\begin{tikzpicture}
  % Draw central node
  \node[draw,circle] (center) at (0,0) {$v_0$};
  
  % Draw outer nodes and edges
  \foreach \i in {1,2,3,4,5}{
    \node[draw,circle] (v\i) at (72*\i:2) {$v_\i$};
    \draw (center) -- (v\i);
     % Connect outer nodes to each other
    \ifnum\i>1
      \pgfmathtruncatemacro{\previ}{\i-1}
      \draw (v\previ) -- (v\i);
  
    \fi
      % Draw outer edges
       }
  
  % Connect the last node to the first node to close the wheel
  \draw   (center) -- (v5);
   \draw  (v1) -- (v5);
      % Draw outer edges
  \foreach \i in {2,3,4,5}{
    \node[draw,circle] (v\i) at (72*\i:2) {$v_\i$};
 \draw  (center) -- (v3);

       }
\end{tikzpicture}

      \caption{$\mvf(G, v_0, v_1)=2$:  Vertices $v_0,v_1$ are connected by two non-trivial vertex disjoint paths, $v_0, v_2,v_1$  and $v_0, v_5,v_1$. There is a pair with larger connectivity, namely $\mvf(G, v_1, v_4)=3$.  Vertices $v_1,v_4$ are connected by three non-trivial vertex disjoint paths, namely, $(v_1, v_0,v_4)$, $(v_1, v_2,v_3, v_4)$,   and $(v_1, v_5, v_4)$. The maximum pairwise connectivity $\mvf$ of this graph is $3$.  The maximum vertex degree $\Delta$ is 5. }
    \label{fig:kappa}
\end{figure}
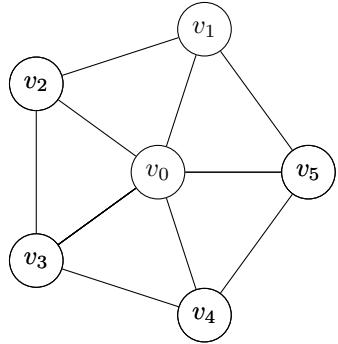
%Second, we study how many conditional independence tests are needed. We notice that the maximum pairwise connectivity plays an important role also here: We show that in some cases it is necessary to conduct $|V(G)|^{\Omega (\kappa)}$ tests to identify $G$ (\Cref{the:xplowerbound}). Again, the naive algorithm in the proof of  \Cref{the:basicalgo} needs at most $|V(G)|^{\Oh(\mvf)}$ tests. Thus, in the worst case, no algorithm can do with significantly fewer tests.

Next, we delve into the question of how many conditional independence tests are required. It becomes evident that the maximum pairwise connectivity plays a significant role in this context as well. We establish that in certain scenarios, it becomes necessary to conduct as many as $|V(G)|^{\Omega (\kappa)}$ tests to identify the graph $G$, as outlined in \Cref{the:xplowerbound}.
Once again, the naive algorithm employed in the proof of \Cref{the:basicalgo} requires, at most, $|V(G)|^{\Oh(\mvf)}$ tests. Consequently, in the worst-case scenario, no algorithm can significantly reduce the number of required tests.
On the positive side, we show that if the treewidth $\tw$ of $G$ is much smaller than $\mvf$, then $|V(G)|^{\tw}$ conditional independence tests with conditioning sets of size at most $2 \mvf$ suffice (\Cref{the:twubmain}).

%(A high number of tests increases probability of getting wrong results, same with the conditioning set size. Our results show what structural properties of the underlying distribution allow reliable learning of structures.)

%- Implications: Constraint-based learning is not practical for distributions with large $\mvf$

%- State-of-the-art (relate to BNs)

%- The number and size of queries

%- Our contributions

{\bf Related work.} A Markov network, which is a tree (the network of treewidth 1) can be learned in polynomial time \citep{chow68} using score-based methods. However, the problem becomes NP-hard for any other treewidth bound \citep{karger01}. It has been shown that Markov networks are PAC-learnable in polynomial time by using constraint-based algorithms \citep{abbeel,chechetka07,narasimhan04}. To the best of our knowledge,
our work provides the first results on constraint-based structure learning of Markov networks beyond the classic $\Oh(n^\Delta)$ bound (see, e.g., \citep{koller}).
  
 Constraint-based structure learning is actively studied in other PGMs, such as Bayesian networks. Typically, one uses the PC algorithm \citep{spirtes} or one of its variants (e.g., \citep{DBLP:conf/pgm/AbellanGM06, giudice}), which learns an undirected skeleton first and then directs the edges. Constraint-based learning of Bayesian networks has also been studied with structural properties such as treewidth \citep{pmlr-v138-talvitie20a}. The most relevant work in this context is the variation of the  PC algorithm proposed by   
  \citet{DBLP:conf/pgm/AbellanGM06}. This heuristic for Bayesian networks exploits small cuts and could speed up learning in practice. In spirit, it is close to the parameter $\kappa$ we define here. The crucial difference here is that for Markov networks, we can \emph{guarantee} theoretically that small connectivity helps (\Cref{the:basicalgo}) to learn the network. 
  In sharp contrast to this result, it is not difficult to come out with the ``worst-case'' examples of Bayesian networks when small $\kappa$ or any other type of connectivity, does not provide any advantages in learning the network. This is due to the $d$-separation criterion and presence of $v$-structures.

%\todo[inline]{About the story, maybe it is good to stress that our upper bounds are not necessarily practical, but our main goal is to establish the fundamentals about the theory of this model of learning Markov networks? (and I think we actually achieve that goal) --Tuukka}

%\todo[inline]{Also, maybe a good thing to underline that our results are somehow mostly query complexity lower/upper bounds, not algorithm theory. To me it sounds somehow more fundamental than algorithm theory --Tuukka}

%\todo[inline]{To me, the story maybe would be that we first identify $\mvf(G)$ as the fundamental parameter in this setting. Then, given that queries of size $\mvf(G)$ are necessary, and $|V(G)|^{\mvf(G)}$ queries are sufficient, it is natural to ask when can we do with a smaller number of queries of size about $\mvf(G)$? --Tuukka}

\section{NOTATION}

%\todo{Now that we add DAGs, we need to make a difference between undirected and directed edges.}

For integers $a$ and $b$, we denote by $[a,b]$ the set of integers $\{a,a+1, \ldots,b-1,b\}$, and by $[a]$ the set $[1,a]$.
For a graph $G$, we denote by $V(G)$ the set of its vertices and $E(G)$ the set of its edges.
For a set $S \subseteq V(G)$, we denote by $G[S]$ the subgraph of $G$ induced by $S$, and by $G \setminus S$ the subgraph of $G$ induced by $V(G) \setminus S$.
We denote by $N(S) \subseteq V(G) \setminus S$ the set of neighbors of vertices in $S$ that are outside of $S$.
We use the convention that a connected component $C$ of a graph is a set of vertices $C \subseteq V(G)$.
A set of vertices $S \subseteq V(G)$ is an \emph{$u$-$v$-separator} if $u$ and $v$ are in different connected components of $G \setminus S$.
We denote by $G - uv$ the graph $G$ with the edge between vertices $u$ and $v$ removed, and by $G + uv$ the graph $G$ with the edge between vertices $u$ and $v$ added.

A \emph{tree decomposition} of a graph $G$ is a pair $(T,\bag)$, where $T$ is a tree and $\bag : V(T) \rightarrow 2^{V(G)}$ is a function assigning each node of $T$ a subset of vertices called a bag, so that $(T,\bag)$ satisfies the following three properties: (1) $V(G) = \bigcup_{t \in V(T)} \bag(t)$, (2) for every edge $uv \in E(G)$, there exists $t \in V(T)$ so that $\{u,v\} \subseteq \bag(t)$, and (3) for every vertex $v \in V(G)$, the subtree $T[\{t \in V(T) \colon v \in \bag(t)\}]$ induced by the bags containing $v$ is connected~\citep{DBLP:journals/jal/RobertsonS86}.
The \emph{width} of a tree decomposition is $\max_{t \in V(T)} |\bag(t)|-1$, and the treewidth of a graph is a minimum width of a tree decomposition of it.
We use $\tw(G)$ to denote the \emph{treewidth} of $G$.
A \emph{path decomposition} is a tree decomposition where the tree $T$ is a path.
The width of a path decomposition and the pathwidth $\pw(G)$ of a graph $G$ are defined analogously. See  \Cref{fig:treedecomp} for an example of a path decomposition. 
\begin{figure}[htb!]
    \centering
\includegraphics[width=0.4\textwidth]{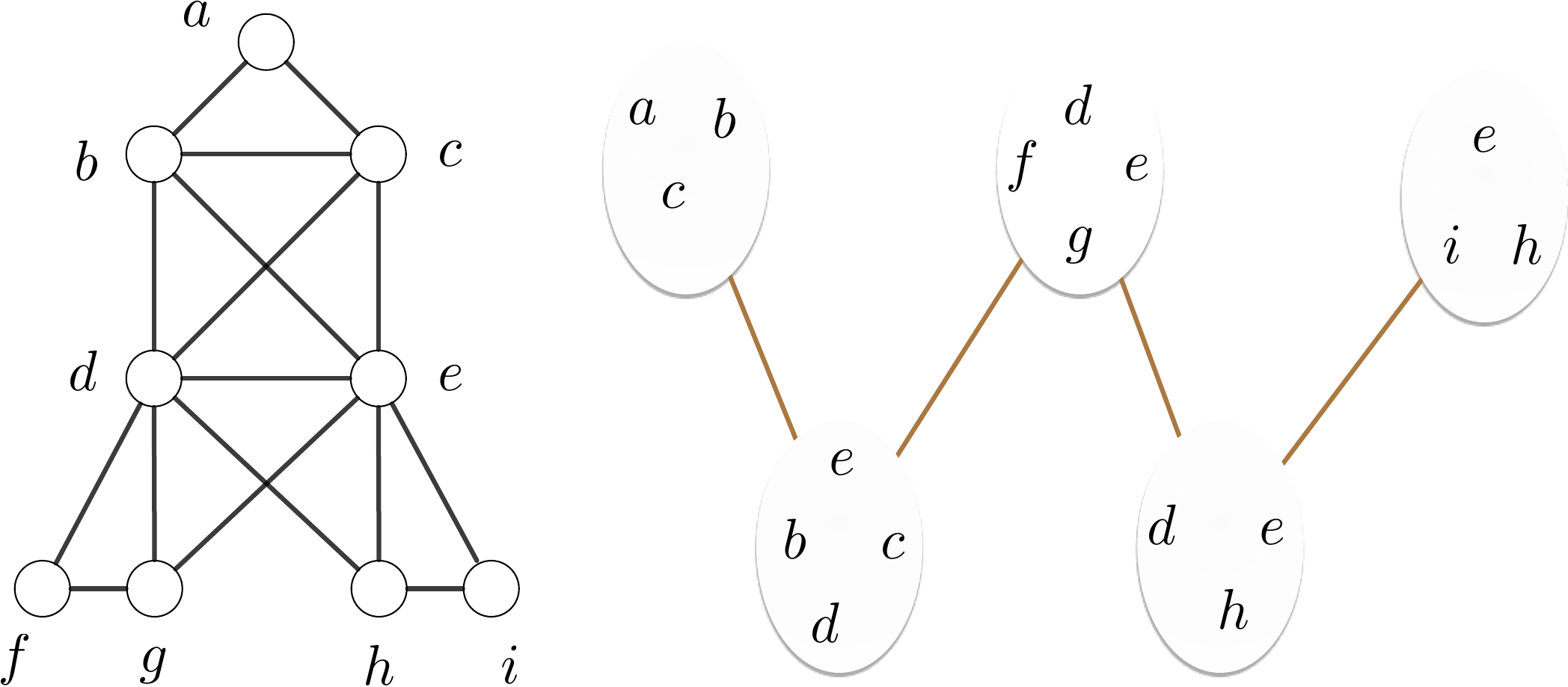}  
    \caption{Example of a path decomposition of width 3 of a graph.}
    \label{fig:treedecomp}
\end{figure}

For two vertices $u,v \in V(G)$, we denote by $\mvf(G, u, v)$ the maximum number of vertex-disjoint paths, each having at least one internal vertex, between $u$ and $v$.
By Menger's theorem, $\mvf(G, u, v)$ equals the size of the smallest $u$-$v$-separator in the graph $G - uv$.
We use $\mvf(G) = \max_{u,v \in V(G)} \mvf(G, u, v)$ to denote the maximum value of $\mvf(G, u, v)$ over all pairs $u,v$.
We call the parameter $\mvf(G)$ the \emph{\parvf} of $G$.
We denote the maximum degree of a graph $G$ by $\Delta(G)$.
Observe that $\mvf(G) \le \Delta(G)$.

We denote an independence test on an underlying graph $G$ by a triple $(S, u, v)$, where $S \subseteq V(G)$ and $u, v \in V(G) \setminus S$, to which the conditional independence oracle answers \connreply if $u$ and $v$ are in the same connected component of $G \setminus S$ and \disconnreply if $u$ and $v$ are in different connected components of $G \setminus S$.
The size of an independence test $(S, u, v)$ is $|S|$.

\section{LOWER AND UPPER BOUNDS BY \Parvf}
This section identifies the maximum pairwise connectivity $\mvf(G)$ as the fundamental parameter.
First, we show that to learn an underlying graph $G$, one must conduct at least one independence test of size at least $\mvf(G)$.

\begin{theorem}
\label{the:mainlb}
For any graph $G$, at least one independence test of size at least $\mvf(G)$ is required to decide if the underlying graph is equal to $G$.
\end{theorem}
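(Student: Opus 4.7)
The plan is an adversarial indistinguishability argument: I will exhibit two distinct graphs $G$ and $G'$ on the same vertex set such that every independence test of size strictly less than $\mvf(G)$ returns the same answer on both. Any (possibly adaptive) algorithm restricted to such tests must then produce identical output on the two instances, so it cannot correctly decide whether the underlying graph equals $G$.

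Concretely, I would pick vertices $u, v \in V(G)$ realizing $\mvf(G, u, v) = \mvf(G)$ and let $G'$ be obtained from $G$ by toggling the edge $uv$ (add it if absent, remove it if present). Note that $G - uv = G' - uv$. The main step is to verify that for every $S \subseteq V(G)$ with $|S| < \mvf(G)$, the graphs $G \setminus S$ and $G' \setminus S$ have the same connected components as partitions of $V(G) \setminus S$. Split into two cases. If $u \in S$ or $v \in S$, then $G \setminus S$ and $G' \setminus S$ are literally equal, since they differ only in the edge $uv$ and at least one of its endpoints has been deleted. Otherwise $u, v \notin S$, and by the Menger-theorem characterization given in the notation section, $\mvf(G, u, v)$ equals the minimum $u$-$v$-separator size in $G - uv$; since $|S| < \mvf(G, u, v)$, the set $S$ is not a $u$-$v$-separator in $G - uv = G' - uv$. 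Hence $u$ and $v$ lie in the same connected component of $(G - uv) \setminus S$, and therefore also of both $G \setminus S$ and $G' \setminus S$. Toggling an edge that lies inside a single connected component does not alter the component partition, proving the claim.

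It follows that every independence test $(S, x, y)$ with $|S| < \mvf(G)$ returns the same \connreply/\disconnreply answer on $G$ and on $G'$, completing the adversary argument. The only subtle point is the precise application of Menger's theorem: because $\mvf$ counts vertex-disjoint paths with at least one internal vertex, the relevant graph in which to measure the minimum separator is $G - uv$ rather than $G$ itself; without this adjustment an off-by-one would appear in the case $uv \in E(G)$.
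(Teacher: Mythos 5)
Your proposal is correct and follows essentially the same argument as the paper: choose $u,v$ with $\mvf(G,u,v)=\mvf(G)$, toggle the edge $uv$ to form $G'$, and show that every test of size less than $\mvf(G)$ answers identically on $G$ and $G'$ because $u$ and $v$ remain in the same component after deleting any such $S$. Your version merely spells out the case split on whether $S$ meets $\{u,v\}$ and the Menger step in a bit more detail than the paper does.
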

\begin{proof}
Let $u,v \in V(G)$ be vertices so that $\mvf(G, u, v) = \mvf(G)$.
Now, if the graph $G$ contains the edge $uv$, define $G'$ to be the graph $G - uv$, and if the graph $G$ does not contain the edge $uv$, define $G'$ to be the graph $G + uv$.
We observe that for any set $S \subseteq V(G)$ that is disjoint from $\{u,v\}$ and has size $|S| < \mvf(G)$, the vertices $u$ and $v$ are in the same connected component in the graph $G \setminus S$, and also in the graph $G' \setminus S$.
This implies that for any set $S \subseteq V(G)$ of size $|S| < \mvf(G)$, the connected components of $G \setminus S$ and $G' \setminus S$ are the same.
It follows that for any independence test of size less than $\mvf(G)$, the answer will be the same whether the underlying graph is $G$ or $G'$, and therefore, as $G$ and $G'$ are not the same graph, any algorithm using only independence tests of size less than $\mvf(G)$ cannot correctly decide if the underlying graph is equal to $G$.
\end{proof}

While the formulation of \Cref{the:mainlb} excludes a decision algorithm that decides whether the underlying graph is equal to $G$, this also implies the same lower bound for learning the underlying graph, because the decision problem can be solved by learning the graph.
Note also that the lower bound holds not only for a worst-case graph $G$, but for every individual graph $G$.
It also follows from the proof that by using independence tests of size less than $\mvf(G)$, even basic properties such as the number of edges of the underlying graph $G$ cannot be determined.

Then we show that in order to learn the underlying graph $G$, it is sufficient to conduct independence tests of size at most $\mvf(G)$.
Our algorithm does not even need to know the value $\mvf(G)$ in advance, but it can determine it without conducting independence tests of larger size than $\mvf(G)$.

\begin{theorem}
\label{the:basicalgo}
There is an algorithm that learns the underlying graph $G$ by using at most $|V(G)|^{\mvf(G)+2}$ independence tests of size at most $\mvf(G)$.
\end{theorem}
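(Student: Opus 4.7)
The plan is to give a simple iterative algorithm that performs all independence tests up to a growing size threshold and uses a self-certifying stopping criterion, eliminating the need to know $\mvf(G)$ in advance. For $k = 0, 1, 2, \ldots$, the algorithm performs every independence test $(S, u, v)$ with $|S| \le k$ that was not conducted in an earlier round, defines the candidate edge set $E_k$ to consist of those pairs $\{u,v\}$ for which no test of size at most $k$ returned \disconnreply, and sets $G_k := (V(G), E_k)$. It then computes $\mvf(G_k)$ (a purely internal computation requiring no oracle calls); if $\mvf(G_k) \le k$, it outputs $G_k$ and halts, otherwise it increments $k$ and iterates.

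Two simple observations drive the correctness analysis. First, $G_k$ always contains $G$ as a spanning subgraph: if $uv \in E(G)$ then $u$ and $v$ lie in the same component of $G \setminus S$ for every $S \subseteq V(G) \setminus \{u,v\}$, so no test can ever return \disconnreply for this pair. Second, adding edges can only create more internally vertex-disjoint paths, so $G \subseteq G_k$ gives $\mvf(G) \le \mvf(G_k)$. Combining these yields correctness of the stopping rule: if the algorithm halts at step $k$, then $\mvf(G) \le \mvf(G_k) \le k$, and for every non-edge $uv \notin E(G)$ Menger's theorem applied to $G - uv = G$ provides a $u$-$v$-separator of size $\mvf(G, u, v) \le k$; this test has been performed and returned \disconnreply, forcing $\{u,v\} \notin E_k$. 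Hence $E_k = E(G)$, i.e.\ $G_k = G$. The same argument with $k = \mvf(G)$ shows that the stopping condition fires by that round at the latest, so the total number of tests is bounded by $\binom{|V(G)|}{2} \sum_{i=0}^{\mvf(G)} \binom{|V(G)|-2}{i} \le |V(G)|^{\mvf(G)+2}$, each of size at most $\mvf(G)$.

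The main subtlety, and the only place where the proof is more than a direct corollary of Menger's theorem, is designing the stopping rule: since $\mvf(G)$ is unknown a priori, we need a computable certificate that further tests would be redundant. The observation that $G_k$ is always a supergraph of $G$, combined with the monotonicity of $\mvf$ under edge additions, makes $\mvf(G_k)$ a natural such certificate and closes the argument.
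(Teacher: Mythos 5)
Your proposal is correct and is essentially identical to the paper's own proof: the same exhaustive testing of all conditioning sets of size at most $k$ for increasing $k$, the same candidate supergraph built from the non-\disconnreply pairs, the same Menger's-theorem argument that non-edges are removed once $k \ge \mvf(G)$, and the same self-certifying stopping rule $\mvf(G_k) \le k$ justified by monotonicity of $\mvf$ under edge additions.
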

\begin{proof}
Let $k$ be a non-negative integer.
We will show that by conducting all $|V(G)|^{k+2}$ possible independence tests of size $\le k$, we can either learn the underlying graph $G$ or conclude that $\mvf(G) > k$.
Then, the algorithm works by starting with $k=0$ and increasing $k$ iteratively, at each iteration conducting all of the independence tests of size $k$, until it concludes what the underlying graph is.
Note that it never conducts independence tests of size more than $\mvf(G)$.

It remains to show that given answers to all possible independence tests of size $\le k$, we can either learn the underlying graph $G$  or conclude that $\mvf(G) > k$.
We create a graph $G^k$ with vertex set $V(G^k) = V(G)$ and edge set so that for each pair of vertices $u,v \in V(G^k)$, there is an edge between $u$ and $v$ if there exists no independence test $(S, u, v)$ of size $|S| \le k$ with the answer \disconnreply.
If there is an edge between $u$ and $v$ in $G$, then there is an edge between $u$ and $v$ in $G^k$, so $G^k$ is a supergraph of $G$.
Next, we show that if $k \ge \mvf(G)$, the converse also holds.

\begin{claim}
\label{the:basicalgo:claim1}
If $k \ge \mvf(G)$, then $G = G^k$.
\end{claim}
\begin{claimproof}
By the observation that $G^k$ is a supergraph of $G$, it remains to prove that if there is no edge between $u$ and $v$ in $G$ then there is no edge between $u$ and $v$ in $G^k$.
If there is no edge between $u$ and $v$ in $G$, by Menger's theorem, there exists an $u$-$v$-separator $S$ of size $|S| = \mvf(G, u, v) \le k$.
Therefore, the independence test $(S, u, v)$ has size at most $k$ and has the answer \disconnreply, so there is no edge $uv$ in the graph $G^k$.
\end{claimproof}

Now, we make the decision as follows.
If $\mvf(G^k) \le k$, we conclude that the underlying graph is $G^k$.
Otherwise, we conclude that $\mvf(G) > k$.
The correctness of the first conclusion follows from the fact that because $G^k$ is a supergraph of $G$, we have that $\mvf(G) \le \mvf(G^k) \le k$, and therefore by \Cref{the:basicalgo:claim1} that $G^k = G$.
The correctness of the second conclusion follows from the fact that if $\mvf(G) \le k$ would hold, then $G = G^k$ would hold by \Cref{the:basicalgo:claim1} and therefore also $\mvf(G^k) = \mvf(G) \le k$ would hold.
\end{proof}

\section{A LOWER BOUND FOR THE NUMBER OF INDEPENDENCE TESTS}
By \Cref{the:mainlb}, to learn a graph $G$, one has to perform at least one independence test of size at least $\mvf(G)$.
However, it is not clear if $|V(G)|^{\mvf(G)+2}$ independence tests made by the algorithm of \Cref{the:basicalgo} are necessary, or if $G$ could be learned by a significantly smaller number of independence tests of size roughly $\mvf(G)$.
We show that $|V(G)|^{\Omega(\mvf(G))}$ independence tests are required in some cases, even if we allow for much larger tests than of size $\mvf(G)$.
Our lower bound holds even under several additional structural restrictions.

A proper interval graph is a graph that can be represented as an intersection graph of intervals on a line, where no interval properly contains another.
Note that every proper interval graph is also a chordal graph.

\begin{theorem}
\label{the:xplowerbound}
Let $n$, $q$, and $k$ be integers with $n \ge q \ge k \ge 3$.
There exists a proper interval graph $G$ with $|V(G)| = \Oh(n q)$ vertices, $\mvf(G) = 2k-2$, pathwidth $\pw(G) = k$, and maximum degree $\Delta(G) = 2k$, so that any algorithm using independence tests of size at most $q$ requires at least $n^{k-3}$ independence tests to decide if the underlying graph is isomorphic to $G$.
\end{theorem}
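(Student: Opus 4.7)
The plan is an adversary argument based on a large family of hard-to-distinguish graphs. I want to construct the target proper interval graph $G$ together with a family $\mathcal{F} = \{G_f : f \in F\}$ of $|F| = n^{k-3}$ alternative proper interval graphs, each distinct from $G$, such that (i) every $G_f$ also satisfies $\mvf(G_f) = 2k-2$, $\pw(G_f) = k$, $\Delta(G_f) = 2k$ and has $\Oh(nq)$ vertices, and (ii) for every independence test $(S, u, v)$ with $|S| \le q$, the oracle answers the same way on $G$ and on all but at most one $G_f$. Property (ii) immediately yields the theorem: an algorithm making fewer than $n^{k-3}$ tests has, for each test, distinguished $G$ from at most one $G_f$, so some $G_{f^*} \in \mathcal{F}$ matches $G$ on every answer and cannot be told apart from it.

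For the concrete construction I envision a long "thickened path" of $\Oh(nq)$ vertices laid out on a line, where each vertex is adjacent only to a short window of at most $2k$ neighbors; with the right overlap pattern this gives $\pw(G) = k$ and $\Delta(G) = 2k$, and by ensuring that the only minimum separators between far-apart vertices are the vertical cross-sections of the strip one obtains $\mvf(G) = 2k-2$. Along this strip I place $n$ "slots", each a small gadget whose internal wiring can be set independently. The family $\mathcal{F}$ is parameterized by fixing the internal state of $k-3$ designated slots, each state ranging over a set of size $n$, for a total of $n^{k-3}$ alternatives; all of them look globally identical to $G$ but differ in localized wiring that preserves the proper interval property and all four structural parameters. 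The key design goal is that distinguishing a particular $G_f$ from $G$ requires an independence test whose conditioning set contains a very specific $\Theta(q)$-vertex pattern — essentially the full cross-section separator of width $2k-2$ surrounding the altered slot, together with enough extra vertices inside the gadget to break the alternative short paths between the two probed endpoints.

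The combinatorial heart is the proof of property (ii): a single test $(S, u, v)$ of size at most $q$ is sensitive to at most one $f \in F$. The intuition is that a conditioning set of size $\le q$ can only "reach into" one altered slot with enough precision to reveal its state, because each slot needs both its cross-section separator and a specific unlocking pattern to expose, and a second such simultaneous unlocking would exceed the budget $q$. Formally, I would attach to each slot a canonical $(2k-2)$-vertex local separator and an "unlocking" set whose joint cost exceeds $q$ for any two slots at once, and then verify by a Menger-style analysis on the thickened path that outside a fully unlocked slot the answer to the test agrees with the answer on $G$. The main obstacle, by a wide margin, is the explicit gadget: simultaneously realizing the proper interval property, the exact parameters $(\mvf, \pw, \Delta)$, and the rigid one-slot-per-test detection property is delicate, and most of the technical work will be casework verifying the connectivity bounds inside the strip. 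Once the gadget is in place, the adversary step is standard: maintain the set of $G_f$ consistent with the answers so far and observe that at most one member is removed per query.
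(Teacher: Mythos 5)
Your proposal is a plan rather than a proof: the entire burden is placed on an unspecified gadget (``the main obstacle, by a wide margin, is the explicit gadget''), and none of the claimed properties --- proper intervality, $\mvf=2k-2$, $\pw=k$, $\Delta=2k$, non-isomorphism of the $G_f$ to $G$ (mere distinctness is not enough for the \emph{isomorphism} decision problem), and above all property (ii) --- is actually established. More importantly, the accounting at the heart of your adversary step looks internally inconsistent. You index the family by the states of $k-3$ slots, each ranging over $n$ values, and you argue that a budget-$q$ test can ``unlock'' at most one slot. But to distinguish $G$ from $G_f$ a test only needs to detect a \emph{single} slot in which they differ; a test that unlocks slot $i$ and is sensitive to state $s$ there changes its answer on every $G_f$ with $f_i=s$, which is $n^{k-4}$ members of the family, not one. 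Under that (more realistic) per-query kill rate, roughly $n$ tests already cover the whole family and the bound collapses. The only way ``at most one $G_f$ per test'' can hold for a product-structured family is if a test must \emph{conjunctively} certify all $k-3$ slot states at once before its answer changes --- the opposite of your one-slot-per-test mechanism --- and nothing in the proposal arranges that.

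For comparison, the paper sidesteps all of this with a much leaner device. It takes $G$ on $3qn$ vertices $v_1,\dots,v_{3qn}$ with $v_i\sim v_j$ iff $|i-j|\le k$, and a \emph{single} alternative $G^{-}=G-v_1v_{k+1}$ (one edge fewer, hence not isomorphic). The adversary always answers \connreply. The key structural fact is that $G^{-}\setminus S$ is connected unless $S$ covers $k-1$ \emph{consecutive} vertices; so a test can only notice the missing edge if its conditioning set contains such a window. The $n^{k-3}$ count then comes not from a family of wirings but from a union bound over random relabelings: for a uniformly random permutation $\pi$, a set of size $\le q$ covers a fixed $(k-1)$-window with probability at most $(2n)^{-(k-1)}$, so the expected number of covered windows over $t<n^{k-3}$ tests is below $1$, and some labeling of both $G$ and $G^{-}$ survives all answers. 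If you want to salvage your approach, the fix is essentially to collapse your $k-3$ independent slots into a single localized defect whose unknown \emph{position} carries the entropy, and to replace ``each query kills at most one alternative'' by ``each query kills at most an $n^{-(k-3)}$ fraction of the alternatives'' --- at which point you have rederived the paper's argument.
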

%Recall that $\mvf(G) \le \Delta(G)$. Since for graph $G$ in the construction of the theorem we have $\pw(G)=2\Delta(G)$, \Cref{the:xplowerbound} also implies the lower bound  $n^{\mvf(G)/2-3}$ on the number of tests. 
The lower bound in \Cref{the:xplowerbound} holds even when we are given a promise that the underlying graph $G'$ is a proper interval graph with $\mvf(G') = 2k-2$, $\pw(G') = k$, and $\Delta(G') = 2k$.
\begin{proof}
Let $G$ be the graph with $3 q n$ vertices $v_1, v_2, \ldots, v_{3qn}$, so that a vertex $v_i$ is adjacent to a vertex $v_j$ whenever $|i-j| \le k$.
It is easy to observe that $G$ is a proper interval graph.
Also, we can observe the facts that $\Delta(G) = 2k$, $\pw(G) = k$, and $\mvf(G) = 2k-2$:
Any vertex $v_i$ with $i \in [k+1, 3qn-k]$ has degree exactly $2k$ and other vertices have degrees less than $2k$.
The sequence of bags $\{v_1, \ldots, v_{k+1}\}, \{v_2, \ldots, v_{k+2}\}, \ldots, \{v_{3qn-k}, \ldots, v_{3qn}\}$ readily gives a path decomposition of width $k$ where each bag is a clique, certifying that the pathwidth is exactly $k$.
To prove $\mvf(G) \le 2k-2$, consider $v_i,v_j \in V(G)$ with $i<j$.
If $j-i > k$, then $\{v_{i+1}, \ldots, v_{i+k}\}$ is a $v_i$-$v_j$-separator of size $k \le 2k-2$.
If $j-i \le k$, then $\{v_{i+1}, \ldots, v_{i+k}\} \cup \{v_{j-1}, \ldots, v_{j-k}\} \setminus \{v_i,v_j\}$ is a $v_i$-$v_j$-separator in $G - v_i v_j$ of size at most $2k-2$.
To prove $\mvf(G) \ge 2k-2$, for any $i \in [k,3qn-k]$ we can construct $2k-2$ vertex-disjoint paths, each with exactly one internal vertex, between $v_i$ and $v_{i+1}$.

We let $G^{-}$ be the graph $G - v_1 v_{k+1}$, i.e., the graph $G$ but with the edge between the vertices $v_1$ and $v_{k+1}$ removed.
Again, by similar arguments we can observe that $G^{-}$ is a proper interval graph with $\mvf(G^{-}) = 2k-2$, $\Delta(G^{-}) = 2k$, and $\pw(G^{-}) = k$.

Consider any algorithm that uses independence tests of size at most $q$, and always makes less than $n^{k-3}$ independence tests to decide whether the underlying graph is isomorphic to $G$.
We consider an adversary that simply always answers \connreply.
It remains to prove that there exists a graph isomorphic to $G$ that is consistent with all of the answers and also a graph isomorphic to $G^{-}$ that is consistent with all of the answers.
As $G$ and $G^{-}$ are not isomorphic ($G^{-}$ has fewer edges than $G$), this would imply that the algorithm cannot decide whether the underlying graph is isomorphic to $G$ or not, and therefore cannot be correct.

To prove this, we will first prove an auxiliary claim.
Let $S \subseteq V(G)$.
For integers $a$ and $b$ with $1 \le a \le b \le 3qn$, we say that $S$ covers the interval $[a,b]$ if it holds that $\{v_a, v_{a+1}, \ldots, v_b\} \subseteq S$.
\begin{claim}
\label{the:xplowerbound:claimconn}
Let $S \subseteq V(G)$.
If $S$ does not cover any interval with $k-1$ vertices, then the graph $G^{-} \setminus S$ is connected.
\end{claim}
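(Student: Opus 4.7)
The plan is to show that $V(G)\setminus S$ itself forms a path in $G^{-}\setminus S$ when the vertices are visited in increasing order of index. Enumerate the vertices of $V(G)\setminus S$ as $v_{i_1}, v_{i_2}, \ldots, v_{i_m}$ with $i_1 < i_2 < \cdots < i_m$. The key combinatorial observation is that, by construction, every index strictly between $i_p$ and $i_{p+1}$ corresponds to a vertex that lies in $S$. If the gap $i_{p+1}-i_p-1$ were at least $k-1$, then the indices $i_p+1, i_p+2, \ldots, i_p+k-1$ would all index vertices of $S$, meaning $S$ covers the interval $[i_p+1, i_p+k-1]$ of $k-1$ consecutive vertices, contradicting the hypothesis. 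Hence $i_{p+1}-i_p \le k-1$.

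Next I would use this bound to produce edges in $G^{-}$. Since $i_{p+1}-i_p \le k-1 \le k$, by the definition of $G$ the vertices $v_{i_p}$ and $v_{i_{p+1}}$ are adjacent in $G$. The only edge of $G$ absent from $G^{-}$ is $v_1 v_{k+1}$, whose endpoints have index difference exactly $k$; the bound $i_{p+1}-i_p \le k-1$ rules this pair out. Therefore $v_{i_p}v_{i_{p+1}} \in E(G^{-})$ for every $p \in [1,m-1]$, so $v_{i_1}, v_{i_2}, \ldots, v_{i_m}$ is a path in $G^{-}\setminus S$ passing through every vertex of $V(G)\setminus S$, which yields the claimed connectivity.

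There is no serious obstacle here; the argument is essentially a one-line pigeonhole in disguise. The only subtle point is that the gap bound needs to be strictly less than $k$ in order to exclude the single deleted edge $v_1 v_{k+1}$, but this is exactly what the $k-1$ (rather than $k$) in the ``no $k-1$ interval covered'' hypothesis buys us.
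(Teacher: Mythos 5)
Your proof is correct and rests on the same pigeonhole observation as the paper's: since no $k-1$ consecutive vertices all lie in $S$, consecutive surviving vertices have index gap at most $k-1$, hence are adjacent in $G$ and the adjacency is not the single deleted edge $v_1v_{k+1}$ (whose index gap is exactly $k$). You package this as an explicit spanning path on $V(G)\setminus S$ while the paper argues by contradiction with a minimal disconnected vertex, but the two arguments are essentially the same; your direct version is, if anything, slightly cleaner.
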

\begin{claimproof}
Let $S \subseteq V(G)$ be a set of vertices that do not cover any interval with $k-1$ vertices.
Let $i \in [3qn]$ be the smallest index so that $v_i \in V(G) \setminus S$.
We prove by contradiction that every vertex in $G^{-} \setminus S$ is in the same connected component as $v_i$, implying that $G^{-} \setminus S$ is connected.
For the sake of contradiction, let $j$ be the smallest index so that $v_j \in V(G) \setminus S$ and $v_j$ is in a different connected component of $G^{-} \setminus S$ than $v_i$.
Note that $j > i$.

First, consider the case when $j-i < k$.
In this case, there is an edge between $v_i$ and $v_j$, implying that they are in the same connected component of $G^{-} \setminus S$, which is a contradiction.

Then, consider the case when $j-i \ge k$.
In this case, the interval $[j-k+1, j-1]$ contains $k-1$ vertices, so $S$ does not cover it, so there is a vertex $v_l \in V(G) \setminus S$ with $l \in [j-k+1, j-1]$.
By the fact that we chose $j$ to be the smallest index such that $v_j \in V(G) \setminus S$ and $v_j$ is in a different connected component of $G^{-} \setminus S$ than $v_i$, it holds that $v_l$ is in the same connected component of $G^{-} \setminus S$ as $v_i$.
However, there is an edge between $v_l$ and $v_j$, so $v_j$ is also in the same component, which is a contradiction.
\end{claimproof}

Let $\pi : V(G) \rightarrow V(G)$ be a permutation of $V(G)$.
We denote by $\pi(G)$ the graph isomorphic to $G$ resulting from applying $\pi$ to the vertices and edges of $G$.
Now, our goal is to show that there exists a permutation $\pi$, so that both $\pi(G)$ and $\pi(G^{-})$ are consistent with always answering \connreply.
\begin{claim}
Let $(S_1, u_1, v_1), \ldots, (S_t, u_t, v_t)$ be a sequence of $t < n^{k-3}$ independence tests.
There exists a permutation $\pi$ of $V(G)$ so that both of the graphs $\pi(G)$ and $\pi(G^{-})$ are consistent with answering \connreply to all of the independence tests.
\end{claim}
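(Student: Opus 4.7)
The plan is to invoke the probabilistic method: pick a permutation $\pi$ of $V(G)$ uniformly at random and show that with positive probability, both $\pi(G)$ and $\pi(G^{-})$ are consistent with the adversary answering \connreply on every test. Since a test $(S_i, u_i, v_i)$ is consistent with \connreply on $\pi(H)$ (for $H \in \{G, G^{-}\}$) whenever $H \setminus \pi^{-1}(S_i)$ is connected, it suffices to find a single $\pi$ for which $G^{-} \setminus \pi^{-1}(S_i)$ is connected for every $i$; connectedness of $G \setminus \pi^{-1}(S_i)$ is then automatic since $G$ is a supergraph of $G^{-}$. By \Cref{the:xplowerbound:claimconn}, this further reduces to ensuring that $\pi^{-1}(S_i)$ does not cover any interval $\{v_a, v_{a+1}, \ldots, v_{a+k-2}\}$ of $k-1$ consecutive vertices.

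The second step is a union bound. Let $N = 3qn$ and $M = N - k + 2$ (the number of $(k-1)$-intervals in $G$). Since $\pi^{-1}(S_i)$ is a uniformly random $|S_i|$-subset of $V(G)$, the probability that it contains a fixed $(k-1)$-interval is at most $(|S_i|/M)^{k-1} \le (q/M)^{k-1}$. Summing over the fewer than $n^{k-3}$ tests and the $M$ intervals, the total failure probability is at most $t \cdot q^{k-1}/M^{k-2}$. Using $t < n^{k-3}$, $q \le n$, and $M \ge 2qn$ (which follows from $q \ge k \ge 3$), this collapses to a constant of order $1/2^{k-2}$, strictly less than $1$ for every $k \ge 3$; hence a suitable $\pi$ exists.

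The main obstacle is verifying that the numerics work out in the tightest case $k = 3$, rather than any combinatorial insight: the heavy lifting has already been done by \Cref{the:xplowerbound:claimconn}, and the bound $t < n^{k-3}$ in the theorem statement is exactly at the sweet spot where each of the $\approx n^{k-2}$ per-test bad events has probability $\approx (1/n)^{k-1}$. The constant factor of $3$ in $N = 3qn$ (as opposed to just $qn$) provides exactly enough slack to absorb the $(k-1)$-correction in $M = N - k + 2$ and keep the union bound below $1$ when $k = 3$.
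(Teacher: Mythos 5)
Your proposal is correct and follows essentially the same route as the paper: a uniformly random permutation, reduction via \Cref{the:xplowerbound:claimconn} to the event that no image of a conditioning set covers a $(k-1)$-interval, and a first-moment/union-bound calculation using the same per-interval probability bound $\bigl(q/(3qn-k+2)\bigr)^{k-1}$ and the same slack $3qn-k+2 \ge 2qn$. The only cosmetic difference is that the paper phrases the final step as "expected number of covered intervals $<1$" while you phrase it as a union bound on the failure probability; the numerics check out in both formulations.
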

\begin{claimproof}
For an independence test $(S_i, u_i, v_i)$ and a permutation $\pi$ of $V(G)$, denote by $\pi(S_i, u_i, v_i) = (\pi(S_i), \pi(u_i), \pi(v_i))$ the application of a permutation $\pi$ to all of the vertices specified in the independence test.
We observe that to prove the claim it suffices to prove that there exists a permutation $\pi$ so that both of the graphs $G$ and $G^{-}$ are consistent with answering \connreply to all independence tests $\pi(S_1, u_1, v_1), \ldots, \pi(S_t, u_t, v_t)$.
Indeed, the answer to the independence test $\pi(S_i, u_i, v_i)$ on an underlying graph $H$ is the same as the answer to the independence test $(S_i, u_i, v_i)$ on an underlying graph $\pi^{-1}(H)$, where $\pi^{-1}$ denotes the inverse permutation of $\pi$.

To prove the existence of such a permutation $\pi$, we show that when $\pi$ is selected uniformly randomly among all permutations of $V(G)$, there is a non-zero probability that $G$ and $G^{-}$ are consistent with answering \connreply to all of the independence tests $\pi(S_1, u_1, v_1), \ldots, \pi(S_t, u_t, v_t)$.

Let $\pi$ be a permutation selected uniformly randomly among all permutations of $V(G)$ and let $(S_i, u_i, v_i)$ be a fixed independence test.
Note that by \Cref{the:xplowerbound:claimconn}, $G^{-}$ is consistent with answering \connreply to $\pi(S_i, u_i, v_i)$ if $\pi(S_i)$ does not cover any interval with $k-1$ vertices.

Now, let $[a,b]$ be an interval with $b-a+1 = k-1$ vertices and $(S_i, u_i, v_i)$ an independence test of size $k-1 \le |S_i| \le q$.
We observe that

\begin{align*}
\Pr[\pi(S_i) \text{ covers } [a,b]] =& \frac{|S_i|}{3qn} \cdot \frac{|S_i|-1}{3qn-1} \cdot \ldots \cdot \frac{|S_i|-k+2}{3qn-k+2}\\
\le& \left(\frac{q}{3qn-k+2}\right)^{k-1} \le \left(\frac{q}{2qn}\right)^{k-1} \\ \le& \left(\frac{1}{2n}\right)^{k-1}.
\end{align*}
Now, because there are $3qn-k+2$ intervals with $k-1$ vertices, the expected number of intervals with $k-1$ vertices that $\pi(S_i)$ covers is at most
\begin{align*}
(3qn-k+2) \cdot \left(\frac{1}{2n}\right)^{k-1} \le qn \cdot \left(\frac{1}{n}\right)^{k-1} \le \left(\frac{1}{n}\right)^{k-3}.
\end{align*}
The expected sum of the numbers of covered intervals with $k-1$ vertices over all of the $t < n^{k-3}$ independence tests is at most 
\begin{align*}
t \cdot \left(\frac{1}{n}\right)^{k-3} < n^{k-3} \cdot \left(\frac{1}{n}\right)^{k-3} < 1.
\end{align*}
Because the expected number is less than one, there is a non-zero probability that all independence tests cover no intervals with $k-1$ vertices.
In particular, there must exist a permutation $\alpha$ so that none of the independence tests $\pi(S_1, u_1, v_1), \ldots, \pi(S_t, u_t, v_t)$ covers an interval with $k-1$ vertices, and therefore all of the graphs $G^{-} \setminus S_i$ are connected, and therefore $G^{-}$ is consistent with answering \connreply to every independence test.
Because $G^{-}$ is a subgraph of $G$, all of the graphs $G \setminus S_i$ are also connected, and therefore $G$ is consistent with answering \connreply to every independence test.
\end{claimproof}
As the graph $\pi(G)$ is isomorphic to $G$ and $\pi(G^{-})$ is isomorphic to $G^{-}$, this finishes the proof.
\end{proof}

Note that while the statement of \Cref{the:xplowerbound} excludes a decision algorithm for deciding if the underlying graph is isomorphic to $G$, the same lower bound also holds for learning the underlying graph because the decision problem can be solved by learning the graph.

\section{TREEWIDTH}

In this section, we give an improved upper bound for the setting where the treewidth $\tw(G)$ is smaller than $\mvf(G)$.
First, we show that an optimum-width tree decomposition of $G$ can be learned by using independence tests of size at most $\tw(G)$.
For this, will need the following lemma from~\cite{DBLP:journals/jco/Bodlaender03}.

\begin{lemma}[\cite{DBLP:journals/jco/Bodlaender03}]
\label{lem:twlem}
Let $G$ be a graph, $u,v \in V(G)$, and $k$ an integer.
If $\mvf(G, u, v) > k$, then every tree decomposition of $G$ of width at most $k$ contains a bag that contains both $u$ and $v$.
\end{lemma}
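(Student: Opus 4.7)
My plan is to prove \Cref{lem:twlem} by contradiction, combining the standard separator property of tree decompositions with Menger's theorem (which is already invoked in the paper to interpret $\mvf(G,u,v)$).

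Suppose $\mvf(G,u,v)>k$ and, for contradiction, fix a tree decomposition $(T,\bag)$ of width at most $k$ in which no bag contains both $u$ and $v$. Note first that the edge $uv$ cannot be in $G$: property~(2) of tree decompositions would force some bag to contain both endpoints. Hence $G-uv=G$, so it suffices to exhibit a $u$-$v$-separator of size at most $k$ in $G$ itself to contradict Menger's theorem.

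Next, let $T_u=\{t : u\in\bag(t)\}$ and $T_v=\{t : v\in\bag(t)\}$. By property~(3) these induce connected subtrees of $T$, and they are disjoint by assumption. Consider the (unique) path in $T$ from $T_u$ to $T_v$, and let $t_1t_2$ be the edge of this path whose endpoint $t_1$ lies in $T_u$ while $t_2$ does not; in particular $u\in\bag(t_1)$ but $u\notin\bag(t_2)$. Set $S=\bag(t_1)\cap\bag(t_2)$. The crucial observation is that $u\in\bag(t_1)\setminus S$, so
\[
|S|\;\le\;|\bag(t_1)|-1\;\le\;(k+1)-1\;=\;k.
\]
Moreover $v\notin S$ for the symmetric reason applied on the $T_v$-side of the path.

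It remains to argue that $S$ actually separates $u$ from $v$ in $G$. This is the classical ``every edge of $T$ induces a separator'' property: removing $t_1t_2$ splits $T$ into components $T^{(1)}\supseteq T_u$ and $T^{(2)}\supseteq T_v$, and by property~(3) any vertex appearing both in a bag of $T^{(1)}$ and a bag of $T^{(2)}$ must lie in $\bag(t_1)\cap\bag(t_2)=S$; combined with property~(2), this forces every $u$-$v$-path in $G$ to pass through $S$. Thus $S$ is a $u$-$v$-separator in $G=G-uv$ of size at most $k$, contradicting $\mvf(G,u,v)>k$ via Menger's theorem.

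The only subtle point—and the place where I would be most careful—is the size bound $|S|\le k$: one must pick the specific edge $t_1t_2$ so that $u\in\bag(t_1)\setminus\bag(t_2)$, because the generic bound $|S|\le k+1$ coming from the width alone is too weak. Everything else is a direct application of standard tree-decomposition properties and Menger.
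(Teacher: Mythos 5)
Your proof is correct. The paper does not prove this lemma at all---it is imported by citation from \citet{DBLP:journals/jco/Bodlaender03}---and your argument is the standard one: since no bag contains both $u$ and $v$, the edge $uv$ is absent, the subtrees $T_u$ and $T_v$ are disjoint, and the adhesion $S=\bag(t_1)\cap\bag(t_2)$ on the first edge of the connecting path is a $u$-$v$-separator avoiding $\{u,v\}$; you correctly identify the one delicate point, namely choosing that edge so that $u\in\bag(t_1)\setminus\bag(t_2)$ to get $|S|\le k$ rather than $k+1$, which is exactly what makes the contradiction with Menger's theorem go through.
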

%\begin{proof}
%Let $(T,\bag)$ be a tree decomposition of $G$ of width at most $k$ that does not contain a bag that contains both $u$ and $v$.
%Let $T_u$ be the subtree of $T$ induced by bags that contain $u$ and $T_v$ the subtree of $T$ induced by bags that contain $v$.
%Now, because $T_u$ and $T_v$ are connected subtrees, there is an unique node $t \in T_u$ that is the closest to the subtree $T_v$, in particular, the node $t \in T_u$ separates $T_u$ from $T_v$ in $T$.
%By the properties of tree decompositions, it follows that $\bag(t) \setminus \{u\}$ separates $u$ from $v$ in the graph $G$, implying that $\mvf(G, u, v) \le k$.
%\end{proof}

We show that \Cref{lem:twlem} can be harnessed to learn an optimum-width tree decomposition of the underlying graph $G$ by independence tests of size at most $\tw(G)$.
This is similar to the proof of \Cref{the:basicalgo}.

\begin{theorem}
\label{the:twcomput}
There is an algorithm that learns a tree decomposition of the underlying graph $G$ of width at most $\tw(G)$ by using at most $|V(G)|^{\tw(G)+2}$ independence tests of size at most $\tw(G)$.
\end{theorem}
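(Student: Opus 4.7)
The plan is to mimic the structure of the proof of \Cref{the:basicalgo}, replacing the role of $\mvf$ by $\tw$ and using \Cref{lem:twlem} in place of Menger's theorem. I would iterate $k = 0, 1, 2, \ldots$, at each step performing all at most $|V(G)|^{k+2}$ independence tests $(S, u, v)$ with $|S| \le k$. From the answers I form, exactly as before, a graph $G^k$ on vertex set $V(G)$ by placing an edge $uv$ whenever no test of size $\le k$ with parameters $(S, u, v)$ answered \disconnreply. The algorithm halts at the first $k$ such that $\tw(G^k) \le k$, and outputs an optimum-width tree decomposition of $G^k$ computed (noninteractively, without further oracle calls) from the now fully-known graph $G^k$.

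The two key properties I would then establish are: (i) $G \subseteq G^k$, and every extra edge $uv \in E(G^k) \setminus E(G)$ satisfies $\mvf(G, u, v) > k$; (ii) for every $k \ge \tw(G)$, one has $\tw(G^k) = \tw(G)$, and any tree decomposition of $G^k$ of width $\le k$ is automatically a tree decomposition of $G$ of the same width. For (i), $G \subseteq G^k$ is immediate since no $S$ disjoint from $\{u,v\}$ separates $u$ from $v$ in $G$ when $uv \in E(G)$; and if $uv \notin E(G)$ but $uv \in E(G^k)$, then no set of size $\le k$ separates $u$ from $v$ in $G = G - uv$, so by Menger's theorem $\mvf(G, u, v) > k$. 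For (ii), I would take any tree decomposition $(T, \bag)$ of $G$ of width $\le \tw(G) \le k$, verify properties (1) and (3) are unchanged, and check property (2) for $G^k$: edges of $G$ are handled by hypothesis, while any extra edge $uv$ of $G^k$ has $\mvf(G, u, v) > k$, so \Cref{lem:twlem} applied to $(T, \bag)$ forces $u, v$ into a common bag. The reverse inequality $\tw(G) \le \tw(G^k)$ is immediate because $G$ is a subgraph of $G^k$.

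Putting the pieces together: while $k < \tw(G)$ we certainly have $\tw(G^k) \ge \tw(G) > k$, so the algorithm does not stop prematurely; at $k = \tw(G)$, property (ii) gives $\tw(G^k) = \tw(G) \le k$, the algorithm halts, and any tree decomposition of $G^k$ of width $\le k$ is by (ii) also a tree decomposition of $G$ of width $\le \tw(G)$. The total number of independence tests is bounded by a geometric sum $\sum_{j=0}^{\tw(G)} |V(G)|^{j+2}$, which for $|V(G)| \ge 2$ fits under the stated $|V(G)|^{\tw(G)+2}$ bound (up to the usual absorption into the exponent for small $n$, handled exactly as in \Cref{the:basicalgo}), and every test has size at most $\tw(G)$.

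The main obstacle I anticipate is property (ii), specifically ensuring that the edges added to build $G^k$ are precisely the ones that \Cref{lem:twlem} allows us to add for free to a width-$k$ tree decomposition. The clean dichotomy I would rely on is: either $uv \in E(G)$, in which case property (2) of the tree decomposition of $G$ handles it directly, or $uv \notin E(G)$, in which case the absence of a separating set of size $\le k$ in $G - uv = G$ combined with Menger forces $\mvf(G, u, v) > k$, triggering \Cref{lem:twlem}. Once this dichotomy is made precise, the rest of the argument is a direct translation of the proof of \Cref{the:basicalgo}.
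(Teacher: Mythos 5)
Your proposal is correct and follows essentially the same route as the paper's proof: the same construction of the supergraph $G^k$ from the test answers, the same appeal to \Cref{lem:twlem} (via Menger's theorem) to show that a width-$k$ tree decomposition of $G$ is already a tree decomposition of $G^k$, and the same iterative increase of $k$ with the halting condition $\tw(G^k) \le k$. The only cosmetic difference is that you phrase the key correctness step directly ($k \ge \tw(G)$ implies $\tw(G^k) = \tw(G)$) where the paper argues by contradiction.
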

\begin{proof}
Let $k$ be a non-negative integer.
We show that by conducting all $|V(G)|^{k+2}$ possible independence tests of size $\le k$, we can either conclude that $\tw(G) \le k$ and output a tree decomposition of width at most $k$ of the underlying graph $G$, or to conclude that $\tw(G) > k$.
Then, the algorithm works by starting with $k=0$ and increasing $k$ iteratively, at each iteration conducting all of the independence tests of size $k$, until it concludes with a tree decomposition of minimum width.
Note that it never conducts independence tests of size more than $\tw(G)$.

It remains to show that given answers to all possible independence tests of size $\le k$, we can either conclude with a tree decomposition of the underlying graph of width at most $k$, or that $\tw(G) > k$.
We create a graph $G^k$, so that for each pair of vertices $u, v \in V(G)$ there is an edge between $u$ and $v$ if there exists no independence test $(S,u,v)$ of size $|S| \le k$ with the answer \disconnreply.
Clearly, $G^k$ is a supergraph of $G$, and therefore $\tw(G^k) \ge \tw(G)$ and any tree decomposition of $G^k$ is also a tree decomposition of $G$.
We use the $|V(G)|^{k+2}$ time algorithm of~\citet{ArnborgCP87} to either compute a tree decomposition of $G^k$ of width at most $k$ or to decide that $\tw(G^k) > k$.
When we get a tree decomposition of width at most $k$, we return this tree decomposition.
In the case when $\tw(G^k) > k$, we conclude that $\tw(G) > k$.
Let us prove that this conclusion is correct.

Suppose $G$ has a tree decomposition $(T,\bag)$ of width at most $k$.
For any edge $uv \in E(G^k) \setminus E(G)$, it holds that $\mvf(G, u, v) > k$, and therefore by \Cref{lem:twlem}, $(T,\bag)$ contains a bag that contains both $u$ and $v$.
Therefore, $(T,\bag)$ is also a tree decomposition of $G^k$, so $\tw(G^k) \le k$, but this is a contradiction.
\end{proof}

Then we show that \Cref{the:twcomput} can be leveraged to learn the underlying graph $G$ by conducting $|V(G)|^{\Oh(\tw(G))}$ independence tests of size $\Oh(\tw(G))$ plus only a polynomial number of independence tests of size at most $2 \cdot \mvf(G)$.

\begin{theorem}
\label{the:twubmain}
There is an algorithm that learns the underlying graph $G$ by using at most $|V(G)|^{\Oh(\tw(G))}$ independence tests of size at most $\Oh(\tw(G))$ and at most $|V(G)| \cdot \tw(G)$ independence tests of size at most $2 \cdot \mvf(G)$.
\end{theorem}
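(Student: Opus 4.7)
The plan is to reuse \Cref{the:twcomput} to learn both a tree decomposition of $G$ and the associated supergraph $G^{\tw(G)}$ cheaply, and then finish by performing exactly one larger test per remaining candidate edge.

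For the first phase, I would apply \Cref{the:twcomput} to obtain a tree decomposition $(T,\bag)$ of $G$ of width $\tw(G)$ using $|V(G)|^{\tw(G)+2}$ independence tests of size at most $\tw(G)$, which supplies the first part of the test budget. The same set of tests also determines the supergraph $G^{\tw(G)}$ from the proof of \Cref{the:basicalgo}, since every independence test of size at most $\tw(G)$ has been asked: declare $uv \in E(G^{\tw(G)})$ iff no such test separated $u$ from $v$. By \Cref{lem:twlem}, every edge $(a,b) \in E(G^{\tw(G)}) \setminus E(G)$ has $\mvf(G,a,b) > \tw(G)$ and therefore $a$ and $b$ share a bag of $(T,\bag)$, so $(T,\bag)$ remains a tree decomposition of $G^{\tw(G)}$ of width $\tw(G)$. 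Since a graph of treewidth $w$ on $n$ vertices has at most $w\cdot n$ edges, we get $|E(G^{\tw(G)})| \leq |V(G)| \cdot \tw(G)$.

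For the second phase, I iterate over each edge $uv \in E(G^{\tw(G)})$ and perform one independence test $(S_{uv}, u, v)$, where $S_{uv}$ is a minimum $u$-$v$-vertex-separator in $G^{\tw(G)} - uv$ computed by a max-flow subroutine on the already-known graph $G^{\tw(G)}$; declare $uv \in E(G)$ iff the test returns \connreply. Correctness is immediate: if $uv \in E(G)$, the direct edge keeps $u$ and $v$ connected in $G \setminus S_{uv}$, yielding \connreply; if $uv \notin E(G)$, then $G \subseteq G^{\tw(G)} - uv$, so $S_{uv}$ separates $u$ from $v$ in $G$ and the answer is \disconnreply. The number of tests in this phase is $|E(G^{\tw(G)})| \leq |V(G)| \cdot \tw(G)$, which matches the second part of the budget.

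The main obstacle is proving the size bound $|S_{uv}| = \mvf(G^{\tw(G)}, u, v) \leq 2\mvf(G)$. I would split on whether $\mvf(G) \leq \tw(G)$: in that case \Cref{the:basicalgo:claim1} (applied with $k = \tw(G)$) gives $G^{\tw(G)} = G$, so $|S_{uv}| = \mvf(G,u,v) \leq \mvf(G)$. In the remaining case $\mvf(G) > \tw(G)$ I would prove the structural inequality $\mvf(G^{\tw(G)}, u, v) \leq \mvf(G,u,v) + \tw(G)$; combined with $\mvf(G,u,v) \leq \mvf(G)$ and $\tw(G) < \mvf(G)$, this yields the target bound $2\mvf(G)$. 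To prove the inequality I would take a minimum $u$-$v$-separator $S^*$ of $G - uv$ and a suitably chosen bag $\bag(t)$ with $u,v \in \bag(t)$, and show via a tree-decomposition routing argument that $S^* \cup (\bag(t) \setminus \{u,v\})$ is a $u$-$v$-separator in $G^{\tw(G)} - uv$: using that every extra edge of $G^{\tw(G)}$ has both endpoints in a common bag, any $u$-$v$-walk in $G^{\tw(G)} - uv$ that avoids $S^*$ is forced to pass through $\bag(t)$. Pinning down the right choice of $t$ and making this routing rigorous is the delicate technical step.
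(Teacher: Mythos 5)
Your phase 1 is fine and is close in spirit to the paper's start: \Cref{the:twcomput} gives the tree decomposition, and reusing the same batch of tests to form the supergraph $G^{\tw(G)}$ costs nothing extra; the bound $|E(G^{\tw(G)})| \le |V(G)|\cdot\tw(G)$ via \Cref{lem:twlem} is also correct, and your phase-2 correctness argument (a separator of the supergraph is a separator of $G$, while a true edge always answers \connreply) is valid. The genuine gap is exactly the step you flag as delicate: the size bound $\mvf(G^{\tw(G)},u,v)\le 2\mvf(G)$ is never established, and your sketched routing argument does not go through as stated. The obstacle is that \Cref{lem:twlem} only guarantees that each extra edge $ab\in E(G^{\tw(G)})\setminus E(G)$ has its endpoints together in \emph{some} bag, not in the particular bag $t$ containing $u$ and $v$; so a $u$--$v$ walk in $G^{\tw(G)}-uv$ avoiding $S^*$ may use an extra edge whose endpoints lie in a bag far from $t$, and nothing in your sketch forces such a walk to meet $\bag(t)$. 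One can check that an extra edge $ab$ with $a$ in the $u$-side and $b$ in the $v$-side of $(G-uv)\setminus S^*$ satisfies $\mvf(G,a,b)\le \mvf(G,u,v)+1$, so such crossing edges genuinely exist whenever $\mvf(G,u,v)\ge \tw(G)$ (precisely the case you are in), and they are not required to touch any prescribed bag. Hence the inequality $\mvf(G^{\tw(G)},u,v)\le \mvf(G,u,v)+\tw(G)$, on which your whole phase 2 rests, is an unproven structural claim; I do not see a counterexample, but I also do not see a proof, and the theorem cannot be considered proved with it missing.

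It is instructive that the paper avoids any statement about $\mvf(G^{\tw(G)})$ altogether: instead of computing a minimum separator offline in the known supergraph, it spends an additional $|V(G)|^{\Oh(\tw(G))}$ tests of size $\Oh(\tw(G))$ per candidate pair to adaptively build a separator of the \emph{true} graph $G$. Concretely, it selects an LCA-closed set $R$ of at most $\mvf(G)/(\tw(G)+1)-1$ nodes of the decomposition so that $S=\bag(R)\cup\{u,v\}$ has size at most $\mvf(G)$ and every component $C$ of $G\setminus S$ has $\mvf(G[C\cup\{u,v\}],u,v)=\Oh(\tw(G))$ and small boundary; it then finds, by brute-force tests as in your \Cref{the:twcomput}-style enumeration, a minimum local separator $X_i$ in each component, and uses the single test on $(S\setminus\{u,v\})\cup\bigcup_i X_i$, whose size is at most $\mvf(G)+\sum_i|X_i|\le 2\mvf(G)$ by an additivity argument over the disjoint components. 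If you want to pursue your (genuinely simpler) route, you must either prove your structural inequality about how much the closure $G\mapsto G^{\tw(G)}$ can boost pairwise connectivity, or fall back on an adaptive construction of a separator of $G$ itself as the paper does.
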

\begin{proof}
We assume that $\tw(G) \ge 1$, as otherwise, $G$ has no edges, and we can solve the problem by $|V(G)|^2$ independence tests of size $0$.
We start by using the algorithm of \Cref{the:twcomput} to compute a tree decomposition $(T,\bag)$ of $G$ of width $\tw(G)$.
Let $u,v \in V(G)$ be a pair of vertices that occurs in some bag of $(T,\bag)$ together.
Because the graph that contains edges between all such pairs has treewidth at most $\tw(G)$, and any graph with $n$ vertices and treewidth $k$ has at most $n \cdot k$ edges, there are at most $|V(G)| \cdot \tw(G)$ such pairs.
Therefore, we focus on giving an algorithm that learns whether $uv \in E(G)$ by using $|V(G)|^{\Oh(\tw(G))}$ independence tests of size $\Oh(\tw(G))$, and only one independence test of size at most $2 \cdot \mvf(G)$.

Now, our goal is to find a set $S \supseteq \{u,v\}$, of size $|S| \le \mvf(G)$, so that for each connected component $C$ of $G \setminus S$, it holds that $\mvf(G[C \cup \{u,v\}], u, v) \le \Oh(\tw(G))$ and $|N(C)| \le \Oh(\tw(G))$.
We need the following auxiliary claim for finding such a set $S$ and then for using it.

\begin{claim}
\label{the:twubmain:claimmincut}
There is an algorithm that given two disjoint vertex subsets $C,S \subseteq V(G)$ with $N(C) \subseteq S$ and $u,v \in S$, and an integer $f$, determines whether $\mvf(G[C \cup \{u, v\}], u, v) \le f$ by using $\Oh(|V(G)|^f)$ independence tests of size at most $|S|+f-1$.
If $\mvf(G[C \cup \{u, v\}], u, v) \le f$ holds, the algorithm also outputs a set $X \subseteq C$ of size $|X| \le f$ so that $X$ separates $u$ from $v$ in the graph $G[C \cup \{u, v\}] - uv$.
\end{claim}
\begin{claimproof}
Given $X \subseteq C$ and $w \in C \setminus X$, we can compute if $u$ is reachable from $w$ in $G[C \cup \{u, v\}] \setminus X$ by conducting the independence test $(X \cup S \setminus \{u\}, w, u)$, and if $v$ is reachable from $w$ by conducting the independence test $(X \cup S \setminus \{u\}, w, v)$.
Therefore, given $X \subseteq C$, we can with $|C|-|X|$ independence tests of size $|X|+|S|-1$ determine whether $X$ separates $u$ from $v$ in the graph $G[C \cup \{u, v\}] - uv$.
By Menger's theorem, the smallest size of such $X$ is equal to $\mvf(G[C \cup \{u, v\}], u, v)$, so by testing all subsets $X$ of $C$ of size at most $f$ we can test if $\mvf(G[C \cup \{u, v\}], u, v) \le f$.
\end{claimproof}

Then, we describe how to find the desired set $S$.
We will treat the tree $T$ of the decomposition as rooted at an arbitrarily selected node, and use the standard rooted tree terminology.
Moreover, by adding the root as an additional node, we can assume that the bag of the root is empty.
For a set of nodes $R \subseteq V(T)$, we denote by $\bag(R)$ the union of their bags.
We will find $R \subseteq V(T)$, so that $|R| \le \mvf(G)/(\tw(G)+1)-1$, and the set $S = \bag(R) \cup \{u,v\}$ satisfies the required properties.
Moreover, the set $R$ will be \emph{LCA-closed}: for any pair of nodes in $R$, their lowest common ancestor (LCA) is also in $R$.
This implies that for any connected component $C$ of $G \setminus (\bag(R) \cup \{u,v\})$, its neighborhood $N(C)$ is a subset of a union of two bags and $\{u,v\}$, and in particular has size $|N(C)| \le 2 \tw(G)+4$.

To find the set $R$, we proceed as follows.
For a node $t \in V(T)$, we denote by $\cmp(t) \subseteq V(G) \setminus \bag(t)$ the vertices that occur in the bags of the subtree rooted at $t$, but not in $\bag(t)$. 
We process the nodes of $T$ in any order so that all descendants of a node are processed before it, for example, in the depth-first-order.
At each node $t$ we are processing, we will decide, based on the following three cases, whether to add $t$ to $R$.
\begin{enumerate}
\item\label{the:twubmain:rconst1} If $t$ is the lowest common ancestor of two nodes already in $R$, we add $t$ to $R$.
\item\label{the:twubmain:rconst2} If no descendant of $t$ is in $R$, we add $t$ to $R$ if $\mvf(G[\cmp(t) \cup \{u,v\}], u, v) \ge 2\tw(G)+2$.
\item\label{the:twubmain:rconst3} If $t$ has a descendant in $R$ but (\ref{the:twubmain:rconst1}) does not apply, there is a unique descendant $l \in R$ of $t$ that is the lowest common ancestor of the set of all descendants of $t$ in $R$. Now, let $C = \cmp(t) \setminus (\cmp(l) \cup \bag(l))$.
We add $t$ to $R$ if $\mvf(G[C \cup \{u,v\}], u, v) \ge 2\tw(G)+2$.
\end{enumerate}

First, to argue that this process of selecting $R$ can be implemented with $|V(G)|^{\Oh(\tw(G))}$ independence tests of size $\Oh(\tw(G))$, observe that in case (2) we have that $N(\cmp(t)) \subseteq \bag(t)$, and in case (3) we have that $N(C) \subseteq \bag(t) \cup \bag(l)$.
Therefore, we can use the algorithm of \Cref{the:twubmain:claimmincut} for checking if $\mvf(G[C \cup \{u,v\}], u, v) \ge 2\tw(G)+2$ in these cases, by using at most $|V(G)|^{2 \tw(G)+2}$ independence tests of size at most $4 \tw(G) + 4$.

Then, we wish to show that $\mvf(G[C \cup \{u,v\}]) \le 3 \tw(G) + 2$ for each connected component $C$ of $G \setminus \bag(R)$.
Suppose that there exists a component $C$ with $\mvf(G[C \cup \{u,v\}]) > 3 \tw(G) + 2$, and let $t$ be the lowest node for which it holds that $\mvf(G[(C \cap \cmp(t)) \cup \{u,v\}] > 3 \tw(G) + 2$.
Such $t$ exists because $\cmp(r) = V(G)$ for the root node $r$.
By the fact that $C$ intersects a connected subtree of bags of $(T,\bag)$ and by our construction, it must hold that $t \in R$.
Now, a unique child $c$ of $t$ exists such that $C \subseteq \cmp(c) \cup \bag(c)$.
Because $t$ was the lowest node such  that $\mvf(G[(C \cap \cmp(t)) \cup \{u,v\}] > 3 \tw(G) + 2$, the set $C$ must intersect $\bag(c)$. Hence $c \notin R$.
However, since $|\bag(c)| \le \tw(G)+1$, we have that $\mvf(G[(C \cap \cmp(c)) \cup \{u,v\}] \ge 2 \tw(G) + 2$. This contradicts our construction, in particular, the node $c$ should have been selected to $R$ in this case.

Finally, we argue that $|R| \le \mvf(G)/(\tw(G)+1)-1$.
First, observe that if a node $t$ is added to $R$ in the cases (\ref{the:twubmain:rconst2}) or (\ref{the:twubmain:rconst3}), then $\bag(t)$ separates the considered component $C$ (in the case (\ref{the:twubmain:rconst2}) let $C = \cmp(t)$) from the nodes not processed yet, which implies that the sets $C$ considered in such cases are disjoint.
Note also that each such set $C$ is disjoint from the bags of the nodes later added to $R$. In particular, $C \subseteq V(G) \setminus \bag(R)$.
Let  $C_1, \ldots, C_h$ be the set of such components.  We observe that $\mvf(G, u, v) \ge \sum_{i=1}^h \mvf(G[C_i \cup \{u,v\}], u, v)$.
Because $\mvf(G[C_i \cup \{u,v\}], u, v) \ge 2\tw(G)+2$ for each $i$, this implies that the number of nodes added to $R$ in cases (2) and (3) is at most $\mvf(G)/(2 \tw(G) + 2)$.
The number of nodes added in cases (1) is at most the number of nodes added in cases (2) minus one.
In particular, whenever case (1) occurs, the number of nodes in $R$ that can be reached from the root without going through any other nodes in $R$ decreases by one, and in case (2), this number increases by one, and in case (3) this number is unchanged.
Therefore, $|R| \le \mvf(G)/(\tw(G) + 1) - 1$.

Now we have a set $S = \bag(R) \cup \{u,v\}$ so that $|S| \le \mvf(G)$ and for each connected component $C$ of $G \setminus S$ it holds that $\mvf(G[C \cup \{u,v\}], u, v) \le 3 \cdot \tw(G)+2$ and $|N(C)| \le 2\tw(G)+4$.
Because $|N(C)| \le 2\tw(G)+4$, we can explicitly compute the connected components $C_1, \ldots, C_h$ of $G \setminus S$ and their neighborhoods $N(C_1), \ldots, N(C_h)$ by $|V(G)|^{\Oh(\tw(G))}$ queries of size at most $2\tw(G)+4$.
Then, we use the algorithm of \Cref{the:twubmain:claimmincut} to compute a set $X_i$ for each $i \in [h]$ so that $X_i$ separates $u$ from $v$ in the graph $G[C_i \cup \{u,v\}] - uv$, and $|X_i| = \mvf(G[C_i \cup \{u,v\}], u, v)$.
By a similar argument as above, we conclude that $\sum_{i=1}^h |X_i| \le \mvf(G, u, v)$.
Finally, we observe that if there is no edge between $u$ and $v$, then $(S \setminus \{u,v\}) \cup \bigcup_{i=1}^h X_i$ separates $u$ from $v$ in $G$, so $((S \setminus \{u,v\}) \cup \bigcup_{i=1}^h X_i, u, v)$ is our final independence test that determines whether $uv \in E(G)$.
It has size at most $|S|+\sum_{i=1}^h |X_i| \le 2 \cdot \mvf(G)$.
\end{proof}

\section{CONCLUSION}
This paper establishes fundamental lower and upper bounds for the constraint-based learning of Markov networks. Our results expand upon existing work, which focused solely on the parameter maximum degree $\Delta(G)$~\citep{koller}, by identifying $\mvf(G)$ as the most critical parameter. Additionally, we explore learning parameterized by treewidth, a classical parameter in the context of probabilistic graphical models.

Regarding the sizes of conditioning sets, our findings are conclusive. We improve upon known bounds by demonstrating that tests of size $\mvf(G)$ are sufficient. (Recall that $\mvf(G) \leq \Delta(G)$.) We complement these results with a matching lower bound.

%This paper establishes fundamental lower and upper bounds for constraint-based learning of Markov networks.
%Our results extend existing work from studying only the parameter maximum degree $\Delta(G)$~\citep{koller} by identifying $\mvf(G)$ as the most critical parameter and also studying the learning parameterized by treewidth, which is a classical parameter in the setting of probabilistic graphical models.  

%When it comes to the sizes of conditioning sets, our results are conclusive. We improve the known bounds by showing that tests of size $\mvf (G)$ suffice. (Recall that $\mvf (G) \leq \Delta(G)$). We complement these results with a matching lower bound.

%Considering the required number of tests, the situation becomes more intricate. Here, we hope that our work paves the road for future research in constraint-based learning of Markov networks from the perspective of structural graph theory. As a concrete open question, we ask if the upper bound from \Cref{the:twubmain} is optimal.
When considering the required number of tests, the situation becomes more intricate. Here, we hope that our work lays the foundation for future research in constraint-based learning of Markov networks from the perspective of structural graph theory. An intriguing open question remains: Can the upper bound presented in \Cref{the:twubmain} be considered optimal? Recall that the treewidth of a graph does not exceed its pathwidth. Thus by 
  \Cref{the:mainlb,the:xplowerbound} the bounds in \Cref{the:twubmain} on the number and sizes of tests in terms of treewidth are, in some sense, optimal. However, we do not exclude a possibility of replacing the treewidth with a graph parameter smaller than the treewidth. For example, could the treewidth in the statement of \Cref{the:twubmain} be substituted with the graph's degeneracy $\delta(G)$? It is worth noting that the degeneracy is always at most the treewidth.
In particular,  could the underlying graph $G$  be learned by $|V(G)|^{f(\delta(G))}$ independence tests of size at most $f(\mvf(G))$, for some function $f$?

%In particular, while \Cref{the:mainlb,the:xplowerbound} show that the result is optimal for the treewidth parameterization, it does not exclude replacing treewidth with a graph parameter smaller than the treewidth.
%Could the treewidth in the statement of \Cref{the:twubmain} be replaced by the degeneracy $\delta(G)$ of the graph, which is known to be always at most the treewidth?

\medskip\noindent\textbf{Acknowledgement.} The research leading to these results has been supported by the Research Council of Norway (grant no. 314528) and Trond Mohn forskningsstiftelse (grant no. TMS2023TMT01).

\clearpage

\bibliographystyle{plainnat}
\bibliography{bibliography}

\section*{Checklist}

 \begin{enumerate}

 \item For all models and algorithms presented, check if you include:
 \begin{enumerate}
   \item A clear description of the mathematical setting, assumptions, algorithm, and/or model. [Yes]
   \item An analysis of the properties and complexity (time, space, sample size) of any algorithm. [Yes]
   \item (Optional) Anonymized source code, with specification of all dependencies, including external libraries. [Not Applicable]
 \end{enumerate}

 \item For any theoretical claim, check if you include:
 \begin{enumerate}
   \item Statements of the full set of assumptions of all theoretical results. [Yes]
   \item Complete proofs of all theoretical results. [Yes]
   \item Clear explanations of any assumptions. [Yes]     
 \end{enumerate}

 \item For all figures and tables that present empirical results, check if you include:
 \begin{enumerate}
   \item The code, data, and instructions needed to reproduce the main experimental results (either in the supplemental material or as a URL). [Not Applicable]
   \item All the training details (e.g., data splits, hyperparameters, how they were chosen). [Not Applicable]
         \item A clear definition of the specific measure or statistics and error bars (e.g., with respect to the random seed after running experiments multiple times). [Not Applicable]
         \item A description of the computing infrastructure used. (e.g., type of GPUs, internal cluster, or cloud provider). [Not Applicable]
 \end{enumerate}

 \item If you are using existing assets (e.g., code, data, models) or curating/releasing new assets, check if you include:
 \begin{enumerate}
   \item Citations of the creator If your work uses existing assets. [Not Applicable]
   \item The license information of the assets, if applicable. [Not Applicable]
   \item New assets either in the supplemental material or as a URL, if applicable. [Not Applicable]
   \item Information about consent from data providers/curators. [Not Applicable]
   \item Discussion of sensible content if applicable, e.g., personally identifiable information or offensive content. [Not Applicable]
 \end{enumerate}

 \item If you used crowdsourcing or conducted research with human subjects, check if you include:
 \begin{enumerate}
   \item The full text of instructions given to participants and screenshots. [Not Applicable]
   \item Descriptions of potential participant risks, with links to Institutional Review Board (IRB) approvals if applicable. [Not Applicable]
   \item The estimated hourly wage paid to participants and the total amount spent on participant compensation. [Not Applicable]
 \end{enumerate}

 \end{enumerate}

\end{document}